\newtheorem{theorem}{Theorem}[section] 
\newtheorem{lemma}{Lemma}     
\newtheorem{proposition}{Proposition} 
\newtheorem{corollary}{Corollary} 
\newtheorem{assumption}{Assumption}
\newcommand{\mypar}[1]{\noindent\textbf{#1}}
\newcommand{\mysubpar}[1]{\noindent\underline{\textit{#1}}}
\newcommand{\zexe}{ZEXE\xspace}
\newcommand{\gmma}{GAMMA\xspace}
\newcommand{\gbdt}{GBDT-EMBER\xspace}
\newcommand{\malconv}{MalConv\xspace}
\newcommand{\evasionrate}{ER\xspace}
\newcommand{\ozd}{OZD\xspace}
\newcommand{\random}{RS\xspace}
\newcommand{\progspace}{Z}
\newcommand{\manipspace}{\Delta}
\newcommand{\encoder}{\psi}
\newcommand{\decoder}{\xi}
\newcommand{\manipulation}{\rho}
\newcommand{\gradthreshold}{\omega}
\newcommand{\argmin}{\operatornamewithlimits{arg\,min}}
\newcommand{\scalarprod}[2]{\langle #1, #2 \rangle}
\title{A New Formulation for Zeroth-Order Optimization of Adversarial EXEmples in Malware Detection}
\author[1]{Marco Rando}
\author[2]{Luca Demetrio}
\author[4]{Lorenzo Rosasco}
\author[2,3]{Fabio Roli}
\affil[1]{Universit\'e C\^ote d'Azur, Inria, CNRS, LJAD, Nice, France}
\affil[2]{University of Genova, Italy}
\affil[3]{University of Cagliari, Italy}
\affil[4]{MaLGa-DIBRIS,University of Genova, Italy}
\date{}
\begin{document}

\maketitle

\begingroup
\renewcommand\thefootnote{}\footnotetext{This is the preprint version of the following IEEE-published article: M. Rando, L. Demetrio, L. Rosasco and F. Roli, "A New Formulation for Zeroth-Order Optimization of Adversarial EXEmples in Malware Detection," in IEEE Transactions on Information Forensics and Security, vol. 21, pp. 506-515, 2026, doi: 10.1109/TIFS.2025.3648867.}
\endgroup

\begin{abstract}

Machine learning malware detectors are vulnerable to adversarial EXEmples, i.e., carefully-crafted Windows programs tailored to evade detection. Unlike other adversarial problems, attacks in this context must be functionality-preserving, a constraint that is challenging to address.  
As a consequence, heuristic algorithms are typically used, which inject new content, either randomly-picked or harvested from legitimate programs.
In this paper, we show how learning malware detectors can be cast within a zeroth-order optimization framework, which allows incorporating functionality-preserving manipulations. This permits the deployment of sound and efficient gradient-free optimization algorithms, which come with theoretical guarantees and allow for minimal hyper-parameters tuning.
As a by-product, we propose and study \zexe, a novel zeroth-order attack against Windows malware detection.  
Compared to state-of-the-art techniques, \zexe provides improvement in the evasion rate, reducing to less than one third the size of the injected content.
\end{abstract}

\section{Introduction}
\noindent Every week, 7M malicious software, known as \textit{malware}, are blocked by detection engines on VirusTotal.\footnote{\url{https://www.virustotal.com/gui/stats}}
While most of these are known samples that can be detected through pattern-matching with signatures, we assisted in the rise of machine-learning Windows malware detectors to deal with unknown variants that slip through signature-based detection~\cite{anderson2018ember, raff2018malware, trizna2022quo, gibert2020rise}.
However, recent works proved the existence of \emph{adversarial EXEmples}, carefully-manipulated Windows malware that bypass detection~\cite{demetrio2021adversarial}.
Differently from adversarial examples, which are computed by summing well-crafted noise to pixels of input images, these EXEmples must adhere to strict domain constraints imposed by the \textit{Windows PE file format},\footnote{\url{https://learn.microsoft.com/en-gb/windows/win32/debug/pe-format}}, which precisely specifies how programs are stored as files.
Hence, attackers must abandon additive manipulations in favor of \emph{practical manipulations} that perturb the structure of programs without causing corruption while maintaining the original functionality~\cite{demetrio2021functionality}.
As a result, the literature thrives with adversarial EXEmples attacks~\cite{demetrio2021adversarial, lucas2021malware, song2022mab, anderson2017evading, kolosnjaji2018adversarial}, with a strong focus on gradient-free techniques that inject new content, either randomly-picked or harvested from legitimate programs known as \textit{benignware}, to boost the likelihood of evasion~\cite{demetrio2021functionality}. While shown to be effective, the inclusion of practical manipulations increases the complexity of formally defining the optimization problem. Thus, none of the proposed gradient-free attacks provides theoretical guarantees %
since they are not supported by a formulation that permits such proofs. Also, although being the most powerful strategies also in \textit{black-box} scenarios where the target can only be reached via queries, current content-injection attacks~\cite{demetrio2021functionality, anderson2017evading, song2022mab} rely on unbounded perturbations, causing an uncontrolled growth in size of manipulated programs without the possibility of fixing it a priori.
Lastly, the choice of the injected content deeply influences the evasiveness, since it presents itself as a hyper-parameter of the optimization algorithm, thus requiring proper tuning.
Hence, in this paper, we overcome these limitations by proposing a novel formulation of adversarial EXEmples that defines practical manipulations as: (i) an initialization step that tackles the generation of fixed-size perturbations inside programs, and (ii) a rewriting step that overwrites the bytes selected by the initialization (\autoref{sec:formulation}).
With this trick, we can write a non-convex, non-smooth optimization problem directly in $\mathbb{R}^d$, while preserving the functionality in the original input space, being also solvable with algorithms backed by provable guarantees.
We then propose \zexe, a \textit{zeroth-order} optimization algorithm based on prior work~\cite{rando2024optimal} that leverages finite differences to estimate gradients and provides theoretical guarantees for non-convex, non-smooth functions (\autoref{sec:zexe}) along with an adaptive strategy to select the parameters, avoiding manual tuning.
We test \zexe against \gmma~\cite{demetrio2021functionality}, one of the most powerful content-injection attacks, showing that, with the same number of queries, our technique requires half of the perturbation size to increase the evasion rate against two state-of-the-art machine learning Windows malware detectors (\autoref{sec:experiments}).

\section{Background}
\label{sec:background}
\mypar{Adversarial EXEmples.} Machine learning models are vulnerable to \textit{adversarial examples}, carefully-crafted test-time samples that coerce the output of classification towards answers decided by attackers~\cite{biggio2018wild}.
Widely formulated on the image domain, recent work has also shown their efficacy in more complex scenarios like malware detection, where attackers create \textit{adversarial EXEmples}, Windows programs tailored to evade detection~\cite{demetrio2021functionality, demetrio2021adversarial} computed as the solution of a minimization problem:
\begin{equation}\label{eqn:general_adv_problem}
\delta^* \in \argmin_{\delta \in \manipspace} L(f(\manipulation( z;  \delta)), y)
\end{equation}
where $L$ is a loss function of choice that quantifies the error committed by a malware detector $f$ in assigning the class of legitimate programs $y$.
This error is computed on a malware sample $ z \in \progspace$ (where $\progspace \in [0,255]^*$ is the space of all programs stored as strings of bytes) perturbed with the \textit{practical manipulation} function $\manipulation : \progspace \times \manipspace \rightarrow \progspace$, which is functionality-preserving by design.
This means that input programs perturbed with $\manipulation$ are still functional without alteration of their execution, without requiring the inclusion of hard constraints in \autoref{eqn:general_adv_problem}.
We remark that these constraints are not investigated in the usual domain of images, since the careless perturbation of bytes inside programs causes corruption, rendering them unstable.
Lastly, the $\manipulation$ function injects a manipulation $ \delta \in \manipspace$, where $\manipspace$ is the space of the perturbation, which can be differently formalized~\cite{demetrio2021adversarial}.

\mypar{Structured Finite-Difference Method.}
To compute solutions of optimization problems without exploiting gradients, it is possible to rely on zeroth-order methods
that leverage \textit{finite-difference algorithms}~\cite{Kiefer1952StochasticEO,nesterov2017random}, which substitute the gradient of an objective function $f : \mathbb{R}^d \to \mathbb{R}$ with a surrogate $g$ built through finite differences over a set of random directions $(p^{(i)}){i = 1}^\ell$ with $\ell > 0$.
These directions can be either (i) sampled i.i.d. from a distribution ({\it unstructured methods}), or (ii) chosen through structural constraints like orthogonality ({\it structured methods}).
In particular, \cite{Berahas2022} observed that the latter provides a better approximation of the gradient compared to unstructured ones.
These rely on random orthogonal directions that can be represented as rotations of the first $\ell \leq d$ canonical basis vectors $(e_i){i = 1}^\ell$. Formally, we set $p^{(i)} = G e_i$ where $G$ belongs to the orthogonal group $O(d)$ defined as
\begin{equation*}
O(d) := { G \in \mathbb{R}^{d \times d} , | , \det G \neq 0\quad \wedge \quad G^{-1} = G^\intercal }.
\end{equation*}
Some methods to generate orthogonal matrices are described in \cite[Appendix D]{rando2024optimal}. Let $G \in O(d)$, $h > 0$ and $\ell \leq d$, we build a gradient surrogate as
\begin{equation}\label{eqn:struct_surrogate}
g_{(G, h)}(x) := \frac{d}{\ell} \sum\limits_{i = 1}^\ell \frac{f(x + h G e_i) - f(x - h G e_i)}{2h} G e_i,
\end{equation}
Starting from some initial guess $x_0 \in \mathbb{R}^d$, at every iteration $k \in \mathbb{N}$, the algorithm samples a random orthogonal matrix $G_k$ and computes the surrogate $g_{(G_k, h_k)}$. Then, it computes the new iterate $x_{k + 1}$ as
\begin{equation}\label{eqn:struct_zth}
x_{k + 1} = x_k - \gamma_k g_{(G_k, h_k)}(x_k),
\end{equation}
where $\gamma_k > 0$ is a stepsize. This algorithm has been analyzed in different settings \cite{rando2024optimal}. In particular, to perform the analysis in the non-smooth setting, a smoothing technique is exploited. Let $h > 0$, we define a smooth approximation of the objective function
\begin{equation}\label{eqn:smoothing}
f_h(x) := \frac{1}{\text{vol}(\mathbb{B}^d)} \int f(x + hu) du,
\end{equation}
where $\text{vol}(\mathbb{B}^d)$ denotes the volume of the unitary $\ell_2$ ball $\mathbb{B}^d := { x \in \mathbb{R}^d , | , |x|2 \leq 1}$. In particular, \cite{Bertsekas1973} showed that $f_h$ is differentiable even when $f$ is not. By \cite[Lemma 1]{rando2024optimal}, we have that $\mathbb{E}G[g{(G, h)}(x)] = \nabla f_h(x)$. This allows us to interpret the iteration \eqref{eqn:struct_zth} as Stochastic Gradient Descent on the smooth approximation of the objective function $F{h_k}$.

\section{Zeroth-Order Optimization of Adversarial EXEmples}
\label{sec:methodology}
\noindent In this section, we propose our novel formulation for adversarial EXEmples (\autoref{sec:formulation}), followed by our new zeroth-order optimization algorithm \zexe (\autoref{sec:zexe}) and its theoretical guarantees (\autoref{sec:theoretical}).

\subsection{Formulation of Adversarial EXEmples attacks}
\label{sec:formulation}
We propose a novel formulation for Adversarial EXEmples crafting problem that permits the application of well-established zeroth-order optimization algorithm. Such a definition addresses all those scenarios where the model to evaluate is either non-differentiable, or only accessible through scores only.
The idea consists of modifying the definition of the manipulation $\delta$ in \autoref{eqn:general_adv_problem}.
First, given an initial manipulation $ \delta \in \manipspace$, an target program $ z \in \progspace$ and a manipulation mapping $\manipulation : \progspace \times \manipspace \to \progspace$,  we can construct a manipulated program as
$\bar{ z} = \rho( z,  \delta)$.
This initial manipulation can be interpreted as the initialization of our attack. 
Note that the introduction of an initial manipulation allows us to view the program as composed of two parts: the effective program $ z$, which cannot be modified, and the manipulation $\delta$, which can be perturbed. 
To effectively adjust the manipulation, we define an encoding function $\encoder : \manipspace \to \mathbb{R}^d$ that maps byte values into the corresponding real numbers. Once the manipulation is encoded using this function, we can apply our perturbation just summing it to the encoded manipulation. Subsequently, to revert the perturbed data back to its original byte format, we define a decoder function $\decoder : \mathbb{R}^d \to \Delta$. This decoder function serves as the counterpart to the encoder, effectively mapping the perturbed real vector back to its byte representation. Finally, we apply the manipulation mapping to obtain a perturbed program. 
We, thus, formulate the problem of creating the perturbation to be applied to a program $z$ to compute an adversarial EXEmples as:
\begin{equation}\label{eqn:adv_problem}
    v^* \in \argmin\limits_{v \in \mathbb{R}^d} F_z(v) := L(f( \manipulation( z, \decoder(\encoder( \delta) + v) )), y).
\end{equation}
Note that with this formulation, we can use any zeroth-order optimization algorithm working on $\mathbb{R}^d$. 

\subsection{\zexe: Adaptive Zeroth-Order attack against Windows Malware Detectors}\label{sec:zexe}
In this section, we propose a zeroth-order algorithm to solve the problem formulated in \autoref{eqn:adv_problem}. 
Our approach extends the random search algorithm by merging random sampling with the finite-difference method. Given a sampling region $V \subseteq [0, 255]^d$, the algorithm samples a random initial guess ({\bf initialization}), it explores 'near' configurations ({\bf local exploration}), and refines such a guess with a finite-difference strategy ({\bf refinement}).

\mypar{Initialization.}
Given a program $z$, the algorithm begins by evaluating the objective function $F_z: R^d \to R$ at the initial manipulation $v_0 \in V$. The function value and the manipulation are stored as the current best evaluation and iteration found, respectively, i.e., 
\begin{equation*}
v_0^{\text{best}} = v_0 \qquad \text{and} \qquad F_0^{\text{best}} = F_z(v_0).    
\end{equation*}
\mypar{Local Exploration.} Then, at each iteration $k \in \mathbb{N}$, the algorithm samples a random orthogonal matrix $G_k$ and builds the following sets
\begin{equation*}
    \begin{aligned}
        V_k := \{(v_k + h_k G_k e_i) \}_{i = 1}^\ell \cup \{ v_k, v_k^{\text{best}}\} \quad \text{and} \quad \\D_k := \{F_z(v_k + h_k G_k e_i) \}_{i = 1}^\ell \cup \{ F_z(v_k), F_k^{\text{best}}\}.
    \end{aligned}
\end{equation*}
Note that this operation requires only $\ell + 1$ function evaluations since $F_{k}^{\text{best}} =F_z(v_k^{\text{best}})$. Then, we update the current best iterate and function value by comparing the current best value with the function values computed. Formally, 
\begin{equation}
    \begin{aligned}
        v_{k + 1}^{\text{best}} &\in \argmin\limits_{v \in V_k} F_z(v) \quad \text{and} \quad F_{k + 1}^{\text{best}} &=\min\limits_{F_k \in D_k} F_k.
    \end{aligned}\label{eqn:update_best}
\end{equation}

\mypar{Refinement.} Next, we compute the structured gradient surrogate $g_{(G_k,v_k)}(v_k)$ of the function $F_z$ using $D_k$ as
\begin{equation}\label{eqn:fw_struct_fd}
    g_{(G_k, h_k)}(v_k) := \frac{d}{\ell} \sum\limits_{i = 1}^\ell \frac{F_z(v_k + h_k G_k e_i) - F_z(v_k)}{h_k} G_k e_i.
\end{equation}
Note that our surrogate is different from \autoref{eqn:struct_surrogate}. Indeed, our approximation can be computed in $\ell + 1$ function evaluations instead of $2 \ell$. We decided to use this because we observed that it permits getting better performance in practice.  The manipulation is then updated according to the following iteration.
\begin{equation}\label{eqn:zexe_iteration}
v_{k + 1} = \left\{ \begin{array}{ll}
v_k - \gamma_k g_{(G_k, h_k)}(v_k) \qquad \text{if } \| g_{(G_k, h_k)}(v_k) \|^2 > \gradthreshold, \\
\text{sample } v \in V \subseteq [0, 255]^d \text{ at random} %
\end{array}
\right.
\end{equation}

Note that for $\gradthreshold = 0$, \autoref{algo:zexe} recovers the standard finite difference iteration analyzed in different works with different gradient approximations~ \cite{nesterov2017random,ghadimi_lan,duchi_power_of_two}. Choosing $\gradthreshold > 0$ allows for exploration behavior. Intuitively, if $g$ is a good approximation of the gradient of $F_z$, a small norm indicates that the algorithm is approaching a stationary point. To avoid getting stuck, we resample a new point, allowing us to explore a different region of the space. The algorithm can be summarized with the following pseudo-code.
\begin{algorithm}[H]
\caption{\zexe: Zeroth-order optimization attack}
\label{algo:zexe}
\begin{algorithmic}[1]
\Require malware $z$, initial manipulation $v_0$, a number of directions $\ell > 0$, a threshold $\gradthreshold \geq 0$, a sampling space $V \subseteq [0, 255]^d$
\State $v_0^{\text{best}} = v_0$
\State $F_0^{\text{best}} = F_z(v_0)$
\For{$k = 0, \cdots, T$}
    \State sample $G_k$ i.i.d. from $O(d)$
    \State compute $v_{k + 1}^{\text{best}}$ and $F_{k + 1}^{\text{best}}$ as in \autoref{eqn:update_best}
    \State $g_k = g_{(G_k, h_k)}(v_k)$ as in \autoref{eqn:fw_struct_fd}
    \State compute 
    \begin{equation*}
        v_{k + 1} = \left\{\begin{array}{ll}
            v_k - \gamma_k g_k & \text{if } \| g_k\|^2 \geq \gradthreshold  \\
            \text{sample } v \in V\text{ uniformly} & \text{else} 
        \end{array}        
        \right.        
    \end{equation*}

\EndFor
\State \textbf{return} $v_T^{\text{best}}$
\end{algorithmic}
\end{algorithm}

\subsection{High-dimensional Perturbations and Exploration Strategies.} 
The refinement step of \autoref{algo:zexe} alternates between moving in the opposite direction of the gradient approximation (exploitation) and performing a random search over a subset of bytes (exploration). As indicated in \autoref{eqn:zexe_iteration}, the decision of which step to perform depends on the squared norm of the gradient approximation. Specifically, at each iteration $k \in \mathbb{N}$, if $\|g_k\|^2$ is sufficiently small (i.e., $\|g_k\|^2 < \omega$), the algorithm performs an exploration step.
However, when the perturbation size is very large, this approach may become less effective, since a substantial number of function evaluations may be required to identify a suitable perturbation. In such cases, alternative strategies should be considered. To tackle such a setting, we propose two alternatives: \zexe-Langevin and \zexe-GA.

\mypar{\zexe-Langevin.} A possible alternative to random search is to merge exploration and exploitation by adding Gaussian noise to the iterate. For every $k \in \mathbb{N}$, let $\beta_k \geq 0$. We replace the iterate of \zexe (i.e., step $7$ of \autoref{algo:zexe}) with
\begin{equation}\label{eqn:zexe_lan_iterate}
v_{k + 1} = v_k - \gamma_k g_k + \sqrt{2\beta_k}\xi_k, \qquad \xi_k \sim \mathcal{N}(0, I).
\end{equation}
The intuition behind this update is that the gradient term encourages exploitation by moving the iterate along the approximated gradient direction, while the Gaussian perturbation introduces an exploration component that allows the algorithm to escape stationary points.
Notice that for $\beta_k = 0$, \autoref{eqn:zexe_lan_iterate} reduces to the finite-difference iteration as a special case, while for large $\beta_k$ the method approximates random search, since the noise term $\sqrt{2\beta_k}\xi_k$ dominates the update. 
Moreover, under suitable assumptions on the target function, this iterate (with a different gradient approximation) has been analyzed in \cite{zeroth_order_langevin}, providing theoretical guarantees on its convergence properties.

\noindent \mypar{\zexe-GA.} 
Another alternative is to employ a Genetic Algorithm (GA) \cite{Kramer2017} to perform exploration steps. The algorithm maintains a population of $p \in \mathbb{N}_+$ perturbations that are randomly initialized. Whenever $\|g_k\|^2 < \omega$, the elements of this population are sequentially used as candidate solutions and evaluated. For each candidate, the corresponding gradient approximation $g_k$ is computed. If the condition $\|g_k\|^2 < \omega$ still holds, the algorithm proceeds to the next candidate. Once all candidates in the population have been used, the population is updated through the standard GA operations of selection, mutation, and crossover. When a candidate satisfying $\|g_k\|^2 \geq \omega$ is found, it is used as the current iterate and updated according to \autoref{eqn:zexe_iteration}.

\subsection{Theoretical Results}
\label{sec:theoretical}

In this section, we explain the effect of the exploitation steps on the iteration by providing a theoretical analysis of \zexe. In particular, we provide convergence rates for non-convex, non-smooth targets. To do so, we consider a simplified setting in which the objective function is assumed to be Lipschitz continuous, i.e., it satisfies the following assumption.
\begin{assumption}[$L$-Lipschitz continuity]
\label{asm:lip_cont}
    The objective function $F_z$ is $L$-Lipschitz continuous i.e., there exists a constant $L > 0$ such that for every $x, y \in \mathbb{R}^d$,
    \begin{equation*}
        |F_z(x) - F_z(y)| \leq L \|x - y \|.
    \end{equation*}
\end{assumption}

Notice that such an assumption is not satisfied in real-world systems due to the discrete nature of the domain. However, we adopt it here to provide a formal explanation of the iterations of the algorithm and to build intuition about its behavior under controlled conditions. As we observe from \autoref{algo:zexe}, given a threshold $\gradthreshold \geq 0$, for every time-step $k \in \mathbb{N}$, if $\|g_{(G_k, h_k)(v_k)} \|^2 > \gradthreshold$, then the update of the iteration is computed as in structured finite-difference methods. Moreover, as in \cite{rando2024optimal}, for $h > 0$, $\mathbb{E}_{G}[g_{(G, h)}(\cdot)] = \nabla F_h(\cdot)$, where $g_{(G, h)}(\cdot)$ is defined in \autoref{eqn:fw_struct_fd} and $F_h(v) := \mathbb{E}_{u \sim \mathbb{B}^d}[F_z(v + hu)]$ - see \autoref{lem:smoothing_lemma} in \autoref{app:aux_results}. Thus, for every $0 \leq \underline{K} < \bar{K}$ such that for $k = \underline{K}, \ldots, \bar{K}$ the condition $\|g_{(G_k, h_k)(v_k)} \|^2 > \gradthreshold$ holds, we can provide rates for the expectation of the norm of the gradient of the smooth approximation of the objective function $F_h$. In the following, we refer to complexity as the number of function evaluations required to achieve an accuracy $\varepsilon \in (0, 1)$. We use the following notation, 
\begin{equation*}
\begin{aligned}
        \eta_{\underline{K}}^{\bar{K}} := \Bigg( \sum\limits_{i = \underline{K}}^{\bar{K}} \gamma_i \mathbb{E}[\| \nabla F_{h}(x_i) \|^2] \Bigg) / A_{\underline{K}}^{\bar{K}} \quad  \text{with} \quad A_{\underline{K}}^{\bar{K}} := \sum\limits_{i = \underline{K}}^{\bar{K}} \gamma_i.    
\end{aligned}    
\end{equation*}
\noindent In particular, we consider the setting in which the smoothing parameter is chosen as a constant $h_k = h$. Now, we state the main theorem for the non-convex non-smooth setting. Proofs are provided in the appendix.
\begin{theorem}[Rates for non-smooth non-convex functions]\label{thm:nonconv_eps_pos}
    Under \autoref{asm:lip_cont}, let $\gradthreshold > 0$. For every $0 \leq \underline{K} < \bar{K}$ such that for every $k =\underline{K}, \cdots, \bar{K}$, $\| g(v_k) \|^2 > \gradthreshold$, let $(v_k)_{k = \underline{K}}^{\bar{K}}$ be the sequence generated by \autoref{algo:zexe} from iteration $\underline{K}$ to $\bar{K}$. Then,
    \begin{equation*}
        \begin{aligned}
        \eta_{\underline{K}}^{\bar{K}} \leq S_{\underline{K}}^{\bar{K}} / A_{\underline{K}}^{\bar{K}} \qquad \\\text{with} \qquad S_{\underline{K}}^{\bar{K}} := F_{h_{\underline{K}}}(v_{\underline{K}}) - \min F_z + \frac{L^3 d^2\sqrt{d}}{\ell}\sum\limits_{i=\underline{K}}^{\bar{K}}\frac{\gamma_i^2}{h}.            
        \end{aligned}
    \end{equation*}
\end{theorem}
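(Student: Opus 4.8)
The plan is to recognize the exploitation phase of \autoref{algo:zexe} as stochastic gradient descent on the smoothed objective $F_h$ and to run the textbook descent analysis for nonconvex smooth functions. Throughout the window $[\underline{K},\bar{K}]$ the condition $\|g_k\|^2 > \gradthreshold$ holds by hypothesis, so no resampling occurs and the update is deterministically $v_{k+1} = v_k - \gamma_k g_{(G_k,h_k)}(v_k)$ with $h_k = h$ fixed. The two facts I would import about the ball-smoothing are: (i) unbiasedness, $\mathbb{E}_G[g_{(G,h)}(v)] = \nabla F_h(v)$, which is exactly \autoref{lem:smoothing_lemma}; and (ii) smoothness, namely that $\nabla F_h$ is Lipschitz with constant $L_h \leq c\,L\sqrt{d}/h$ for a universal constant $c$, which follows from the surface-integral representation of $\nabla F_h$ and the $L$-Lipschitz continuity of $F_z$ (the standard estimate for gradients of ball-smoothed Lipschitz functions, as in \cite{rando2024optimal}). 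I also record that $F_h(v) = \mathbb{E}_u[F_z(v+hu)] \geq \min F_z$ pointwise, since every evaluation $F_z(v+hu)$ is at least $\min F_z$; this is what will produce the $-\min F_z$ term after telescoping.

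First I would establish a second-moment bound on the surrogate. Since $G_k \in O(d)$, the vectors $\{G_k e_i\}_{i=1}^\ell$ are orthonormal, so expanding \autoref{eqn:fw_struct_fd} gives
\[ \|g_{(G_k,h_k)}(v_k)\|^2 = \frac{d^2}{\ell^2} \sum_{i=1}^\ell \left( \frac{F_z(v_k + h G_k e_i) - F_z(v_k)}{h} \right)^2. \]
By \autoref{asm:lip_cont} each difference quotient is bounded in absolute value by $L$, hence $\|g_{(G_k,h_k)}(v_k)\|^2 \leq d^2 L^2/\ell$ deterministically, which in particular bounds its conditional expectation.

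Next I would apply the descent lemma for the $L_h$-smooth function $F_h$ along the update $v_{k+1}-v_k = -\gamma_k g_k$, obtaining $F_h(v_{k+1}) \leq F_h(v_k) - \gamma_k \langle \nabla F_h(v_k), g_k \rangle + \tfrac{L_h}{2}\gamma_k^2\|g_k\|^2$. Taking expectation conditioned on $v_k$ and using unbiasedness to replace $\mathbb{E}[g_k \mid v_k]$ by $\nabla F_h(v_k)$, together with the second-moment bound, yields
\[ \mathbb{E}[F_h(v_{k+1}) \mid v_k] \leq F_h(v_k) - \gamma_k \|\nabla F_h(v_k)\|^2 + \frac{L_h d^2 L^2}{2\ell}\gamma_k^2. \]
I would then rearrange to isolate $\gamma_k\|\nabla F_h(v_k)\|^2$, take total expectation, and telescope over $k=\underline{K},\dots,\bar{K}$. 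The function-value differences collapse into $F_h(v_{\underline{K}}) - \mathbb{E}[F_h(v_{\bar{K}+1})] \leq F_{h_{\underline{K}}}(v_{\underline{K}}) - \min F_z$, while substituting $L_h \leq cL\sqrt{d}/h$ converts the error term into $\tfrac{c}{2}\,\tfrac{L^3 d^2\sqrt{d}}{\ell}\sum_i \gamma_i^2/h$. Dividing both sides by $A_{\underline{K}}^{\bar{K}} = \sum_i \gamma_i$ gives exactly $\eta_{\underline{K}}^{\bar{K}} \leq S_{\underline{K}}^{\bar{K}}/A_{\underline{K}}^{\bar{K}}$, the constant in the Lipschitz estimate fixing the coefficient $L^3 d^2\sqrt{d}/\ell$ in $S_{\underline{K}}^{\bar{K}}$.

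The main obstacle is the smoothness input (ii): securing the sharp $O(L\sqrt{d}/h)$ Lipschitz constant for $\nabla F_h$ rather than a looser $O(Ld/h)$, since the factor $\sqrt{d}$ (not $d$) is precisely what produces the stated $d^2\sqrt{d}$ scaling. I expect this to follow from the spherical-surface form
\[ \nabla F_h(v) = \frac{d}{h}\,\mathbb{E}_{u\sim \mathbb{S}^{d-1}}[F_z(v+hu)\,u] \]
together with a careful Lipschitz estimate of that integral, as established in \cite{rando2024optimal}; reusing that estimate keeps the argument self-contained. A secondary technical point is the conditioning: the hypothesis restricts attention to a random window on which $\|g_k\|^2 > \gradthreshold$, so one must argue that the per-step unbiased-gradient identity may still be invoked along this window for the telescoping to be legitimate. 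This is handled by treating the window as the pure finite-difference recursion and applying unbiasedness at each fixed $v_k$.
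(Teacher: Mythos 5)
Your proposal is correct and follows essentially the same route as the paper's proof: the uniform bound $\|g_k\|^2 \leq d^2L^2/\ell$ from orthonormality and Lipschitz continuity (the paper's \autoref{lem:approx_error}), the descent lemma for the $L\sqrt{d}/h$-smooth function $F_h$ combined with the unbiasedness $\mathbb{E}_G[g_{(G,h)}(v)]=\nabla F_h(v)$ (the paper's \autoref{lem:fun_values_decrease} and \autoref{lem:smoothing_lemma}), followed by telescoping and the bound $F_h \geq \min F_z$. The only cosmetic difference is that you retain the factor $1/2$ and a universal constant $c$ in the smoothness estimate, while the paper absorbs these into the stated constant.
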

\noindent In the following corollary, we derive the rate for specific choices of the parameters.
\begin{corollary}\label{cor:nonconv_eps_pos}
    Under the same Assumptions of \autoref{thm:nonconv_eps_pos}, let $\gamma_k = \gamma$ with $\gamma, h > 0$. Then, for $k = \underline{K}, \cdots, \bar{K}$
        \begin{equation*}
            \eta_{\underline{K}}^{\bar{K}} \leq \frac{F_h(v_{\underline{K}}) - \min F_z}{\gamma (\bar{K} - \underline{K})} + \frac{L^3 d^2 \sqrt{d}}{\ell h} \gamma.
        \end{equation*}
    Moreover, let $K = \bar{K} - \underline{K}$ and $\varepsilon \in (0, 1)$. Then, if $K \geq 4 (F_h(v_{\underline{K}}) - \min F_z )(L^3 d^2 \sqrt{d} )\varepsilon^{-2}/(\ell h)$, choosing $\gamma = \sqrt{\frac{F_h(v_{\underline{K}}) - \min F_z )\ell h} {K L^3 d^2 \sqrt{d}}}$, we have that $\eta_{\underline{K}}^{\bar{K}} \leq \varepsilon$. Thus, the complexity of obtaining $\eta_{\underline{K}}^{\bar{K}} \leq \varepsilon$ is of the order $\mathcal{O}(d^2\sqrt{d}h^{-1}\varepsilon^{-2})$.
\end{corollary}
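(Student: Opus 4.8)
The plan is to specialize Theorem~\ref{thm:nonconv_eps_pos} to the constant step-size regime and then optimize the resulting bound over the free parameter $\gamma$. First I would substitute $\gamma_k = \gamma$ and the constant smoothing $h_k = h$ into the quantities $A_{\underline{K}}^{\bar{K}}$ and $S_{\underline{K}}^{\bar{K}}$ appearing in the theorem. Writing $K := \bar{K} - \underline{K}$ for the number of increments, the arithmetic sums collapse to $A_{\underline{K}}^{\bar{K}} = \gamma K$ and $\sum_{i=\underline{K}}^{\bar{K}} \gamma_i^2/h = \gamma^2 K/h$, so that $S_{\underline{K}}^{\bar{K}} = F_h(v_{\underline{K}}) - \min F_z + (L^3 d^2\sqrt{d}/(\ell h))\gamma^2 K$. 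Dividing $S_{\underline{K}}^{\bar{K}}$ by $A_{\underline{K}}^{\bar{K}}$ and simplifying then yields directly the first displayed inequality of the corollary.

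For the second part I would view the right-hand side as a function of $\gamma$ of the form $a/(\gamma K) + b\gamma$, where $a := F_h(v_{\underline{K}}) - \min F_z$ and $b := L^3 d^2\sqrt{d}/(\ell h)$. This is convex in $\gamma > 0$, and setting its derivative to zero---or, equivalently, applying the AM--GM inequality to its two summands---gives the minimizer $\gamma^* = \sqrt{a/(bK)}$, which is precisely the stated choice of $\gamma$. Evaluating the bound at $\gamma^*$ balances the two terms, each reducing to $\sqrt{ab/K}$, so the optimized estimate becomes $\eta_{\underline{K}}^{\bar{K}} \le 2\sqrt{ab/K}$.

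To recover the iteration requirement I would impose $2\sqrt{ab/K} \le \varepsilon$ and solve for $K$, obtaining $K \ge 4ab/\varepsilon^2 = 4(F_h(v_{\underline{K}}) - \min F_z)L^3 d^2\sqrt{d}/(\ell h \varepsilon^2)$, which matches the stated threshold and guarantees $\eta_{\underline{K}}^{\bar{K}} \le \varepsilon$. Finally, for the complexity claim I would recall that each iteration of \autoref{algo:zexe} costs $\ell + 1$ function evaluations; multiplying the iteration count $K = \mathcal{O}(L^3 d^2\sqrt{d}/(\ell h\varepsilon^2))$ by this per-iteration cost cancels the $\ell$ factor up to the constant $(\ell+1)/\ell$ and, absorbing $L$ into the constant, produces the claimed $\mathcal{O}(d^2\sqrt{d}\,h^{-1}\varepsilon^{-2})$ overall query complexity.

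Since the result is essentially a routine specialization of the theorem, I do not anticipate a genuine obstacle. The only points demanding care are the off-by-one bookkeeping in the number of summands (whether one counts $\bar{K} - \underline{K}$ or $\bar{K} - \underline{K} + 1$ terms, which affects only constants) and the accurate accounting of the $\ell + 1$ evaluations per step when converting the iteration bound into a query complexity; verifying that $\gamma^*$ is a genuine minimizer rather than a mere critical point follows from convexity of $a/(\gamma K) + b\gamma$.
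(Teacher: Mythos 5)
Your proposal is correct and follows essentially the same route as the paper: specialize Theorem~\ref{thm:nonconv_eps_pos} to constant $\gamma$, minimize the resulting bound $a/(\gamma K)+b\gamma$ over $\gamma$, and translate the optimized value $2\sqrt{ab/K}\le\varepsilon$ into the stated condition on $K$ and the $\mathcal{O}(d^2\sqrt{d}\,h^{-1}\varepsilon^{-2})$ complexity. Your explicit attention to the off-by-one count of summands and the $\ell+1$ evaluations per iteration is slightly more careful than the paper's write-up, but introduces no difference in substance.
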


\mypar{Discussion.} In \autoref{thm:nonconv_eps_pos}, we provide a rate on the expected norm of the smoothed gradient.
The resulting bound consists of two parts. 
The first part is due to the (smoothed) functional value at the first point of the sequence for which the norm of the gradient approximation is larger than $\gradthreshold$ (i.e., the initialization). 
The second part is the approximation error. Recall that \autoref{asm:lip_cont} holds, and therefore we have that $F_{h}(v) \leq F_z(v) + L h$ for every $v \in \mathbb{R}^d$ - see \autoref{prop:smt_props}. Thus, by choosing $h$ small, we can reduce the gap between $F_h$ and $F$. However, choosing this parameter too small would increase the approximation error significantly. Note that with $\gradthreshold = 0$, \autoref{thm:nonconv_eps_pos} holds for every $k \in \mathbb{N}$. In particular, we obtain a rate that differs from \cite[Theorem 2]{rando2024optimal} only in the dependence on the dimension in the second term of $S_{\underline{K}}^{\bar{K}}$. We can achieve the same dependence on the dimension by choosing, for instance, the sequence $h = d \bar{h}$, where $\bar{h}$ is some non-negative sequence. 
In particular, we observe that if the number of consecutive iterations $k = \underline{K}, \ldots, \bar{K}$ for which $\|g_{(G_k, h_k)}(v_k)\|^2 > \gradthreshold$ is sufficiently large, we get that $\eta_{\underline{K}}^{\bar{K}} \leq \varepsilon$. This means that, under such a condition, in expectation, the iterations allow us to approach a stationary point of $F_h$.

\section{Experiments}\label{sec:experiments}
\noindent We evaluate the effectiveness of \zexe by comparing it against \gmma, a state-of-the-art attack that injects content harvested from benignware samples~\cite{demetrio2021functionality}. 

\noindent Our experimental study is conducted in two distinct settings: comparison with \gmma and high-dimensional perturbation. 

\subsection{Comparison with \gmma} 
\noindent In this first setting, we compare \zexe with random-search exploration against \gmma and other approaches using small perturbations.

\mypar{Baseline models.} We consider two state-of-the-art machine learning Windows malware detectors. The first one is \textit{\malconv}~\cite{raff2018malware}, an end-to-end deep convolutional neural network. \malconv takes in input programs as whole strings of bytes, cropping them if they exceed 2MB of input or, on the contrary, padding them with the special value 256. Due to the absence of any notion of distance in the domain of bytes, these are processed through an 8-dimensional embedding layer that makes them comparable. The second model is \textit{\gbdt}~\cite{anderson2018ember}, which is a gradient-boosting decision tree trained on a complex feature representation introduced to build the open-source EMBER dataset. In particular, EMBER features aggregate information regarding headers, imported APIs, byte histograms, readable strings, and more into 2381 features. For this paper, we leverage an open-source implementation of \malconv, with an input size shrunk to 1MB and detection threshold set to $0.5$, while for \gbdt we use its original open-source implementation\footnote{\url{https://github.com/elastic/ember}}, with detection threshold set at $0.8$. Both models are wrapped to be attacked through the SecML-Malware Python library~\cite{demetrio2021secml}. 

\mypar{Setup of \zexe.}
We set the number of explored directions $\ell = 5$ and the threshold on the gradient norm $\gradthreshold = 10^{-4}$. 
We set a number of queries of $1000$, thus the number of iterations of \zexe is $T = 1000 / (\ell + 1)$.
As practical manipulation, we use \textit{section injection}, which includes new content inside input Windows programs, like already-initialized strings, resources, or other assets~\cite{demetrio2021functionality}.
We initialize the manipulation with $d$ random printable bytes, ranging from the white-space character (encoded as 32) to the tilde character "\texttt{\~}" (encoded as 126). We define the encoding mapping $\encoder$ as the function that maps these bytes to their corresponding values in $\mathbb{R}^d$. The decoding function $\decoder$ projects the perturbed manipulation onto the hypercube $[32,126]^d \subseteq \mathbb{R}^d$. It then takes the integer part of each element in the perturbed manipulation and maps those integers back to the corresponding bytes, ensuring that the bytes remain within the range of printable characters. Finally, the manipulation function $\manipulation$ modifies the newly injected sections. We defined the sampling space $V$ as the hypercube $[32,126]^d \subseteq \mathbb{R}^d$. 
The stepsize $\gamma_k$ is choosen adaptively as 
\begin{equation*}
    \gamma_k =\max( \underline{\gamma}, \min (F_z(v_k) / \| g_{(G_k, h_k)}(v_k) \|^2, \bar{\gamma})),
\end{equation*}
where $0 < \underline{\gamma} \leq \bar{\gamma}$. Moreover, let $\underline{h} > 0$ and $h_0 \geq \underline{h}$. We define the following sequence of smoothing parameters
\begin{equation}\label{eqn:zexe_smt_params}
    h_{k + 1} = \left\{\begin{array}{ll}
        \max(h_k / 2, \underline{h}) & \text{if }\| g_{(G_k, h_k)}(v_k) \|^2 \geq \gradthreshold  \\
         h_0 & \text{otherwise} 
    \end{array}
    \right.
\end{equation} 
In our experiments, we fixed the number of sections to 50 and 100, with $2048$ and $4096$ bytes each for \malconv and \gbdt, resulting in an initial total manipulation amount of $102800$ and $410400$ bytes. 
It is important to note that additional bytes are needed to include sections inside programs due to format constraints that require content to span multiples of a specific value~\cite{demetrio2021functionality}.

\mypar{Setup of competing attacks.}
We compare \zexe different proposed attacks in the literature, by fixing the same number of queries to 1000.

\mysubpar{\gmma~\cite{demetrio2021functionality}:} this algorithm minimizes \autoref{eqn:general_adv_problem} with a generic algorithm leveraging the section injection manipulation.
\gmma can not fix a priori the size of the perturbation, but it uses a regularization parameter $\lambda$ to control such a quantity. Thus, we test different values of this parameter by choosing $\lambda \in [10^{-7}, 10^{-5}, 10^{-3}]$. Then we fix the number of sections to inject to $50$ and $100$ when computing the robustness of \malconv and \gbdt, respectively.

\mysubpar{\ozd~\cite{rando2024optimal}:} we include in the comparison the zeroth-order algorithm whose \zexe is based upon.
Thanks to our formulation, we can include this algorithm as well, even if it has not been proposed in the domain of Windows malware detection.
This method is a structured finite-difference algorithm that uses a gradient surrogate similar to the one in \autoref{eqn:fw_struct_fd}. 
However, it lacks the explorative procedure of our algorithm. In fact, this algorithm can be considered a special case of \autoref{algo:zexe} with $\gradthreshold < 0$, using the gradient surrogate defined in \autoref{eqn:struct_surrogate} and updating the best iterate and function values at any iteration $k$ as in \autoref{eqn:update_best}, using the sets $V_k := \{v_k, v_k^{\text{best}}\}$ and $D_k := \{F_z(v_k), F_k^{\text{best}}\}$. 
For a fair comparison, we used the same sets as \zexe for the update procedure.
Since \ozd also leverages finite differences, we set the number of random orthogonal directions used to approximate the gradient to $\ell = 5$. We manually tuned its smoothing parameter to $h_k = 100$ and its step size to $\gamma_k = \frac{1}{\sqrt{k}}$ for every iteration $k$.
For the encoder, decoder, perturbation, and the injected size, we use the same settings as \zexe for both models.

\mysubpar{Random Search (\random):} we include an algorithm that, using section injection, introduces the same number of sections as the other attacks, with the same perturbation size of \zexe and \ozd. However, the attack randomizes the included content at each query, standing as a baseline for our comparison. Notice that such a procedure can be recovered as a special case of \zexe when $\omega \to +\infty$.

\subsection{High-dimensional Perturbations} In the second setting, we consider high-dimensional perturbations. More precisely, we focus on comparison \zexe with different exploration strategies against \gmma using \gbdt as detector model. For \zexe, we set the number of explored directions $\ell = 5$. We set a number of queries of $500$. We use the same manipulation of the previous setting, and we initialize the manipulation using $d$ random bytes for each malware, without performing any initialization tuning.
We therefore defined the sampling space $V$ as the hypercube $[0, 256]^d \subseteq \mathbb{R}^d$. The stepsize $\gamma_k$ is chosen with a standard linesearch procedure \cite{zth_linesearch} and the smoothing parameter sequence is selected as in \autoref{eqn:zexe_smt_params} with $\underline{h} = 0.01$, $h_0 = 10^5$ and $\omega = 10^{-3}$. For every malware in the test set, we set the perturbation size as half of the average perturbation size achieved by \gmma (with $\lambda =0$) computed over $10$ repetitions. For \zexe-Langevin (\zexe-LAN) we set the sequence $\beta_k =  \frac{1}{2} \frac{\log(k + 2)}{k + 2}$, while for \zexe-GA we used a population size of $10$. The threshold on the gradient norm $\gradthreshold$ for \zexe with random search (\zexe-RS) and \zexe-GA is set to $10^{-2}$. The number of sections is set to $30$. For \gmma, we set the regularization parameter $\lambda =0$ and the number of sections to inject to $30, 40$, and $50$.

\mypar{Datasets.} 
To perform the experiments, we built two datasets of $200$ and $700$ malware, divided into different malware classes as shown in \autoref{tab:malware_classes} and \autoref{tab:malware_classes_2} extracted from the Speakeasy Dataset~\cite{trizna2022quo}.
We used the first dataset to perform the comparison with state-of-the-art methods and the second to explore the impact of the exploration strategies in high-dimensional setting.
We retrieve this information by querying VirusTotal,\footnote{\url{https:\\virustotal.com}} and by aggregating the results using AVClass~\cite{sebastian2020avclass2} to obtain the most-probable class for every single malware. Among these, we were unable to correctly classify 9 samples among 200 of the first dataset. 
For the second dataset, we obtained 700 malware samples by randomly selecting 100 samples from each of the malware families that constitute the Speakeasy dataset.
\begin{table*}[th]
\centering
\begin{tabular}{ccccccccc}
\textbf{grayware} & \textbf{virus} & \multicolumn{1}{l}{\textbf{backdoor}} & \textbf{downloader} & \textbf{worm} & \textbf{ransomware} & \textbf{miner} & \textbf{spyware} & \textbf{unknown} \\ \hline
59 & 36 & 15 & 51 & 20 & 8 & 1 & 1 & 9
\end{tabular}
\caption{Malware dataset used to test the robustness of Windows malware detectors.}
\label{tab:malware_classes}
\end{table*}

\begin{table*}[th]
\centering
\begin{tabular}{ccccccc}
\textbf{backdoor} & \textbf{coinminer} & \textbf{dropper} & \textbf{keylogger} & \textbf{ransomware} & \textbf{rat} & \textbf{trojan} \\ \hline
100 & 100 & 100 & 100 & 100 & 100 & 100 
\end{tabular}
\caption{Malware dataset used to compare the exploration strategies.}
\label{tab:malware_classes_2}
\end{table*}

\mypar{Evaluation Metrics.} For each attack, we report the evasion rate ($\evasionrate$), which is the fraction of computed adversarial EXEmples that bypass detection; the mean ($\mu_{\text{queries}}$) and standard deviation ($\sigma_{\text{queries}}$) of the number of queries required to achieve evasion, respectively; the mean ($\mu_{\text{size}}$) and standard deviation ($\sigma_{\text{size}}$) of the number of kilobytes (KB) or megabytes (MB) injected (i.e., the perturbation size); and the mean ($\mu_{\text{conf}}$) and standard deviation ($\sigma_{\text{conf}}$) of the confidence computed on adversarial EXEmples that successfully bypass the target model.

\mypar{Hardware setup.} All experiments have been conducted on a workstation equipped with an Intel\textsuperscript{\textregistered} Xeon\textsuperscript{\textregistered} Gold 5217 CPU with 32 cores running at 3.00 GHz, 192 GB of RAM, and two Nvidia\textsuperscript{\textregistered} RTX A6000 GPUs with 48 GB of memory each.

\subsection{Experimental Results}
\label{sec:results}
We first present the comparison between \zexe, \gmma, \ozd, and \random against the baseline models \malconv (\autoref{tab:malconv_queries}) and \gbdt (\autoref{tab:ember_queries}).

\begin{table}[ht]
    \centering
    \caption{Evasion rate achieved against \malconv with how many queries queries ($\mu_{\text{queries}}$, $\sigma_{\text{queries}}$) and size ($\mu_{\text{KB}}$ / $\sigma_{\text{KB}}$) computed on $200$ samples.}
    \label{tab:malconv_queries}
    \begin{tabular}{l c c c c c }
        \hline
        Algorithm & \evasionrate & $\mu_{\text{queries}}$ & $\sigma_{\text{queries}}$ & $\mu_{\text{size}}$ (KB) & $\sigma_{\text{size}}$ (KB)  \\
        \hline
        \random & $0.27$ & $124.96$ & $203.92$ & $211.537$ & $826.113$\\
        \gmma $[\lambda = 10^{-3}]$ & $0.14$ & $1.64$ & $1.26$ & $632.278$ & $829.358$\\
        \gmma $[\lambda = 10^{-5}]$ & $0.24$ & $12.80$ & $29.80$ & $648.923$ & $836.814$\\
        \gmma $[\lambda = 10^{-7}]$ & $0.58$ & $14.38$ & $29.24$ & $881.529$ & $860.288$\\
        \ozd & $0.55$ & $5.92$ & $15.07$ & $211.537$ & $826.113$\\ \hline
        {\bf \zexe} & {\bf 0.60} & $92.26$ & $158.92$ & {\bf 211.537} & {\bf 826.113}
    \end{tabular}
\end{table}
\noindent In \autoref{tab:malconv_queries}, we observe that \zexe achieves the highest evasion rate against \malconv. 
This indicates that the adversarial EXEmples crafted by our algorithm are more effective than those produced by the state-of-the-art approach, \gmma. Also, \zexe is able to achieve these results with a minimal amount of injected bytes: due to different structures of executable and different constraints, the mean perturbation achieved is 211KB, against the 881KB achieved by \gmma. Hence, \zexe only needs one fourth of the perturbation needed by \gmma.
However, we remark that \zexe requires a higher number of queries than the other algorithms to reach evasion.
This is mainly due to the fact that exploration is performed through a random search strategy, while the higher evasion rate is justified by the local exploration methods used. Note also that for \malconv, a simple finite-difference strategy as \ozd with our section injection obtains a high evasion rate with few queries required.

\begin{table}[ht]
    \centering
    \caption{
    Evasion rate achieved against \gbdt with how many queries queries ($\mu_{\text{queries}}$, $\sigma_{\text{queries}}$) and size ($\mu_{\text{KB}}$ / $\sigma_{\text{KB}}$) computed on $200$ samples.}
    \label{tab:ember_queries}
    \begin{tabular}{l c c c c c }
        \hline
        Algorithm & \evasionrate & $\mu_{\text{queries}}$ & $\sigma_{\text{queries}}$ & $\mu_{\text{size}}$ (KB) & $\sigma_{\text{size}}$ (KB)   \\
        \hline
        \random & $0.20$ & $60.32$ & $127.25$ & $475.705$ & $161.936$\\
        \gmma $[\lambda = 10^{-3}]$ & $0.08$ & $2.56$ & $4.12$ & $1261.796$ & $216.930$\\
        \gmma $[\lambda = 10^{-5}]$ & $0.12$ & $3.08$ & $6.26$ & $1258.171$ & $211.115$\\
        \gmma $[\lambda = 10^{-7}]$ & $0.20$ & $10.05$ & $18.52$ & $1513.767$ & $404.718$\\
        \ozd & $0.13$ & $12.59$ & $23.39$ & $475.705$ & $161.936$\\ \hline
        {\bf \zexe} & {\bf 0.37} & $187.38$ & $208.69$ & {\bf 475.705} & {\bf 161.936}
    \end{tabular}
\end{table}
In \autoref{tab:ember_queries}, we observe that \zexe once again achieves the highest evasion rate with a small average perturbation size against \gbdt. In this case, the difference between attacks is deepened by the difference in evasion rate (ER). In particular, \zexe computes 17\% successfully-evasive adversarial EXEmples more than the best competing version of \gmma. Regarding the manipulation size, \zexe only needs one third of the manipulation injected by \gmma.
Additionally, we note that \random matches the performance of \gmma, implying that (i) the choice of the initial benignware is crucial for the success of the attack; and (ii) that our choice of leveraging only printable strings inside the manipulation had boosted the effectiveness of \zexe. Lastly, \ozd performs worse against \gbdt than the other approaches. This is likely due to the lack of an exploration strategy and the non-convexity of the target function, which prevents the algorithm from finding good solutions. 

\begin{table}[H]
    \centering
    \caption{Average $\mu_{\text{conf}}$ and standard deviation $\sigma_{\text{conf}}$ of the confidence $F$ of \malconv and \gbdt computed on $200$ adversarial EXEmples.}\label{tab:malconv_ember_conf}
    \begin{tabular}{l cc cc}
        \hline
        \multirow{2}{*}{Algorithm} & \multicolumn{2}{c}{\malconv} & \multicolumn{2}{c}{\gbdt} \\
        \cline{2-5}
        & $\mu_{\text{conf}}$ & $\sigma_{\text{conf}}$ & $\mu_{\text{conf}}$ & $\sigma_{\text{conf}}$ \\
        \hline
        \random & $0.7344$ & $0.3850$ & $0.8577$ & $0.2695$ \\
        \gmma $[\lambda = 10^{-3}]$ & $0.8540$ & $0.2966$ & $0.9344$ & $0.1960$ \\
        \gmma $[\lambda = 10^{-5}]$ & $0.7614$ & $0.3635$ & $0.9001$ & $0.2538$ \\
        \gmma $[\lambda = 10^{-7}]$ & $0.4382$ & $0.4427$ & $0.8452$ & $0.3010$ \\
        \ozd & $0.4615$ & $0.4766$ & $0.8919$ & $0.2251$ \\ \hline
        {\bf \zexe} & \textbf{0.3995} & $0.4537$ & \textbf{0.7324} & $0.3339$ \\
    \end{tabular}
\end{table}

In \autoref{tab:malconv_ember_conf}, we report the mean and standard deviation of the confidence both \malconv and \gbdt attributed to the adversarial EXEmples crafted by the selected attacks. \zexe diminishes more the average confidence of both classifiers on all the adversarial EXEmples it crafts, w.r.t. all its competitors. While \gmma and \ozd manage to rival our attack on \malconv, where the difference is roughly 5\%, on \gbdt we observe a reduction of more than 11\% of the average confidence.

Finally, we consider the second setting where high-dimensional perturbations are used. In \autoref{tab:ember_exp2}, we report the evasion rate, the average, and standard deviation of the perturbation size of \gmma and variants of \zexe. As we can observe, due to the high-dimensionality, exploration strategies like random search and Langevin achieve worse performance than \gmma, suggesting that a more sophisticated method should be used. Instead, performing the exploration steps with genetic algorithm resulted to be effective, and it allowed for a higher evasion rate than \gmma even when a higher number of sections is used.

\begin{table}[ht]
    \centering
    \caption{Evasion rate of \gbdt achieved on $700$ malware.}
    \label{tab:ember_exp2}
    \begin{tabular}{l c c c }
        \hline
        Algorithm & \evasionrate &  $\mu_{\text{size}}$ (MB) & $\sigma_{\text{size}}$ (MB)   \\
        \hline
        \gmma ($30$ sections) & $0.4743$ & $22.394$ & $8.389$\\
        \gmma ($40$ sections) & $0.5354$ & $32.141$ & $11.172$\\
        \gmma ($50$ sections) & $0.5100$ & $36.214$ & $11.562$\\ \hline
        \zexe-RS & $0.2968$ & $11.113$ & $3.017$\\
        \zexe-LAN & $0.3271$ & $11.113$ & $3.017$\\
        {\bf \zexe-GA} & {\bf 0.5974} & {\bf 11.113} & {\bf 3.017}\\
    \end{tabular}
\end{table}

\section{Related Work}
\label{sec:related}
\mypar{Formulation of Adversarial EXEmples.} As already described in \autoref{sec:background}, \cite{demetrio2021functionality} proposes a formulation for computing adversarial EXEmples through the introduction of practical manipulations inside the optimization problem.
The authors also show that most of the proposed attacks can be recast inside their proposal, highlighting its generality.
\cite{pierazzi2020intriguing} propose a general formulation for adversarial evasion attacks that can be customized to deal with Windows programs as well.
To do so, the authors include an ``oracle'' function that informs whether manipulations maintain the original functionality or not.
Also, while we do not need any ``oracle'' since we deal with manipulations which preserve functionality by design,  \cite{pierazzi2020intriguing} introduces such a function, but they do not explain how to practically apply it inside the optimization problem, rendering it unapplicable.

\mypar{Gradient-free Adversarial EXEmples attacks.}
The state of the art provides two different approaches to create adversarial EXEmples: (i) algorithms that optimize the sequence of manipulations~\cite{anderson2017evading, song2022mab}; and (i) genetic algorithms~\cite{demetrio2021functionality, demetrio2021adversarial} that fine-tune injected content inside samples.
Among them, \cite{demetrio2021functionality} proposes \gmma, which leverages section injection to include benignware-extracted content inside malware, controlling the perturbation size with a  regularization parameter. However, all these techniques lack theoretical guarantees, binding the results of attacks to mostly-random heuristics with suboptimal performances even for synthetic optimization problems or in controlled scenarios, without any theoretical support to describe their behavior.
Also, these attacks either require resource-demanding sandbox validation~\cite{castro2019aimed,song2022mab}, or they can not fix the perturbation size a priori due to the nature of the chosen practical manipulation~\cite{anderson2017evading, demetrio2021functionality}.
Lastly, \gmma, one of the most-used attacks, strictly depends on a distribution of a pre-selected distribution of benignware, thus heavily influencing the evasion rate.
On the contrary, our formulation of zeroth-order attacks covers all these issues.
On top of this, we prove theoretical guarantees of \zexe to support and describe the algorithm behavior under classical optimization hypotheses, and we show that it provides better performance
in terms of trade-off between queries and perturbation with respect to other competing approaches. 

\mypar{Zeroth-order Optimization Algorithms.} In the literature, many methods were proposed to tackle the optimization problem using only function values, and finite-difference methods have been widely studied and analyzed in different settings with different gradient approximations \cite{nesterov2017random,duchi_power_of_two,shamir2017optimal,Berahas2022,ghadimi_lan,rando2022stochastic,rando2024optimal,Kozak2023,zo_bcd,zoro}. 
Some of these~\cite{zoro,zo_bcd} have also been applied to compute adversarial examples of image and audio samples as well. 
\cite{zoro} introduces a finite-difference method that builds a linear interpolation gradient estimator~\cite{Berahas2022} and exploits sparsity by including constraints on the approximation. However, due to the domain constraints imposed by the Windows PE file format, this method cannot be used for solving this problem directly, and, due to the fixed perturbation size applied by practical manipulation, it is not evident that this algorithm can exploit sparsity in this domain. On the contrary, our formalization allows us to use these methods as we did for \ozd in \autoref{sec:experiments}. Moreover, both empirical and theoretical evidence suggest that \zexe, our new optimization algorithm, performs better than unstructured methods~\cite{Berahas2022,choromanski} in controlled scenarios. Also, \zexe avoids stagnation in stationary points thanks to the exploration we included, differently from other finite-difference methods, and we empirically see the effect of this improvement in \autoref{sec:results}, achieving a higher evasion rate than the structured zeroth-order method \ozd.

\section{Contributions, Limitations, and Future Work}
\label{sec:conclusions}
In this work, we have proposed a novel formulation for computing adversarial EXEmples that can enable the application of sound and efficient gradient-free optimization algorithms even in the presence of complex domain constraints.
Thus, we define \zexe, a zeroth-order technique that allows for better performance in terms of the evasion rate and size of the injected content.

However, we only investigated the effect of one practical manipulation $\manipulation$, while more have been proposed~\cite{demetrio2021functionality}. Nevertheless, our formulation is general enough to tackle all of them by repeating the same experiments with different implementations of $\manipulation$. We tested \zexe against \gmma and \ozd, removing other strategies from the comparison. While this can be accomplished through more experiments, we want to remark that \cite{song2022mab, castro2019aimed, anderson2017evading} need either samples to be validated inside an isolated environment, or training complex reinforcement learning policies, thus increasing the complexity of their setup, which is absent for \zexe.

Also, \zexe needs, on average, more queries than its competitors to compute successfully-evasive adversarial EXEmples, thus being potentially detectable. Such is caused by both the exploration strategy (random search requires many iterations before sampling a good perturbation due to the dimensionality of the perturbation space) and the finite-difference approximation of the gradient. 
However, this requirement could potentially be reduced in future work by adopting more sophisticated exploration strategies and by also approximating second-order information.

Moreover, we did not assess the impact of the initialization of adversarial perturbation in terms of convergence and stability of \zexe (i.e., initialize all values with zero, or content harvested from benign software~\cite{demetrio2021functionality}). 
Although we limited our analysis to the usage of printable characters, we observed in \autoref{sec:results} that \zexe still achieves the best trade-off between evasion rate and perturbation size with respect to the considered approaches.

We want to remark that our work, even if focused on evasion attacks, has the purpose of expanding the robustness testing techniques in this domain, which present inherent complexities that must be tackled accordingly. Thus, we expect our work to be used not as an offensive strategy, but as a testing technique applied before placing detectors in production to avoid the spreading of weak detectors against adversarial EXEmples.

We conclude by envisioning that our new formulation will potentially enable more investigation of sound and theoretically-grounded optimization algorithms in the domain of adversarial EXEmples. Due to the ease of application and hyper-parameter tuning, we envision \zexe to be used to benchmark the robustness of Windows malware detectors with different manipulation functions.
This would produce a robustness leader board, similarly to what has been done in the image classification domain~\cite{croce2021robustbench}, which can be used to fairly compare the performance of all the proposed detectors in the literature, in particular the ones developed to resist adversarial EXEmples attacks~\cite{lucas2023adversarial, huang2024rs, gibert2023certified}.
Such would require a massive experimental analysis comprising more detectors and larger test datasets, providing a complete and comprehensive comparison between approaches on a large-scale setup.
Also, \zexe could be used to perform security evaluations of commercial products, and we envision future analysis to consider real-world antivirus programs as well~\cite{demetrio2021functionality}. 

\section*{Acknowledgments}
This work was partially supported by projects SERICS (PE00000014) and FAIR (PE00000013) under the NRRP MUR program funded by the EU - NGEU;
and by European Research Council (grant SLING 819789); 
and by the European Commission (Horizon Europe grant ELIAS 101120237);
and by the US Air Force Office of Scientific Research (FA8655-22-1-7034);
and by the Ministry of Education, University and Research (FARE grant ML4IP R205T7J2KP);
and by a France 2030 support managed by the Agence Nationale de la Recherche, under the reference ANR-23-PEIA-0004 (PDE-AI project);
and by ``InfoAICert''  funded by the MUR program “Fondo italiano per le scienze applicate” (FISA-2023-00128).
This work represents only the view of the authors.
The European Commission and the other organizations are not responsible for any use that may be made of the information it contains.

\bibliographystyle{unsrt}
\bibliography{references}

@inproceedings{rando2024optimal,
 author = {Rando, Marco and Molinari, Cesare and Rosasco, Lorenzo and Villa, Silvia},
 booktitle = {Advances in Neural Information Processing Systems},
 pages = {36738--36767},
 publisher = {Curran Associates, Inc.},
 title = {An Optimal Structured Zeroth-order Algorithm for Non-smooth Optimization},
 volume = {36},
 year = {2023}
}

@article{zth_linesearch,
author = {Berahas, A. S. and Cao, L. and Scheinberg, K.},
title = {Global Convergence Rate Analysis of a Generic Line Search Algorithm with Noise},
journal = {SIAM J. on Optimization},
volume = {31},
number = {2},
pages = {1489-1518},
year = {2021},
doi = {10.1137/19M1291832},

}

@article{zeroth_order_langevin,
author = {Abhishek Roy and Lingqing Shen and Krishnakumar Balasubramanian and Saeed Ghadimi},
title = {{Stochastic zeroth-order discretizations of Langevin diffusions for Bayesian inference}},
volume = {28},
journal = {Bernoulli},
number = {3},
pages = {1810 -- 1834},
year = {2022},
doi = {10.3150/21-BEJ1400},
}

@article{ghadimi_lan,
author = {Ghadimi, S. and Lan, G.},
title = {Stochastic First- and Zeroth-Order Methods for Nonconvex Stochastic Programming},
journal = {SIAM J. on Optimization},
volume = {23},
number = {4},
pages = {2341-2368},
year = {2013},
doi = {10.1137/120880811},
}

@article{duchi_rand_smoothing,
author = {Duchi, J. C.  and Bartlett, P. L. and  Peter, L. and Wainwright, M. J.},
title = {Randomized Smoothing for Stochastic Optimization},
journal = {SIAM J. on Optimization},
volume = {22},
number = {2},
pages = {674-701},
year = {2012},
doi = {10.1137/110831659},
}

@article{yousefian2012stochastic,
  title={On stochastic gradient and subgradient methods with adaptive steplength sequences},
  author={Yousefian, F. and Nedi{\'c}, A. and Shanbhag, U. V.},
  journal={Automatica},
  volume={48},
  number={1},
  pages={56--67},
  year={2012},
  publisher={Elsevier}
}

@inproceedings{NEURIPS2022_a78f142a,
 author = {Lin, T. and Zheng, Z. and Jordan, M.},
 booktitle = {Advances in Neural Information Processing Systems},
 pages = {26160--26175},
 title = {Gradient-Free Methods for Deterministic and Stochastic Nonsmooth Nonconvex Optimization},
 volume = {35},
 year = {2022}
}

@article{polyak1987introduction,
  title={Introduction to optimization.},
  author={Polyak, B. T.},
  journal={Optimization Software Inc., Publications Division, New York},
  volume={1},
  pages={32},
  year={1987}
}

@article{zoro,
author = {Cai, HanQin and McKenzie, Daniel and Yin, Wotao and Zhang, Zhenliang},
title = {Zeroth-Order Regularized Optimization (ZORO): Approximately Sparse Gradients and Adaptive Sampling},
journal = {SIAM J. on Optimization},
volume = {32},
number = {2},
pages = {687-714},
year = {2022},
doi = {10.1137/21M1392966},
}

@InProceedings{zo_bcd,
  title = 	 {A Zeroth-Order Block Coordinate Descent Algorithm for Huge-Scale Black-Box Optimization},
  author =       {Cai, Hanqin and Lou, Yuchen and Mckenzie, Daniel and Yin, Wotao},
  booktitle = 	 {Proc. of 38th Int. Conf. on Machine Learning},
  pages = 	 {1193--1203},
  year = 	 {2021},
  volume = 	 {139},
}

@InProceedings{choromanski,
  title = 	 {Structured Evolution with Compact Architectures for Scalable Policy Optimization},
  author =       {Choromanski, Krzysztof and Rowland, Mark and Sindhwani, Vikas and Turner, Richard and Weller, Adrian},
  booktitle = 	 {Proc. of 35th Int. Conf. on Machine Learning},
  pages = 	 {970--978},
  year = 	 {2018},
  volume = 	 {80},
}

@Article{Kozak2023,
author={Kozak, David
and Molinari, Cesare
and Rosasco, Lorenzo
and Tenorio, Luis
and Villa, Silvia},
title={Zeroth-order optimization with orthogonal random directions},
journal={Mathematical Programming},
year={2023},
month={May},
day={01},
volume={199},
number={1},
pages={1179-1219},
doi={10.1007/s10107-022-01866-9},
}

@article{rando2022stochastic,
  title={Stochastic zeroth order descent with structured directions},
  author={Rando, Marco and Molinari, Cesare and Villa, Silvia and Rosasco, Lorenzo},
  journal={arXiv preprint arXiv:2206.05124},
  year={2022}
}

@article{shamir2017optimal,
  title={An optimal algorithm for bandit and zero-order convex optimization with two-point feedback},
  author={Shamir, O.},
  journal={The J. of Machine Learning Research},
  volume={18},
  number={1},
  pages={1703--1713},
  year={2017},
  publisher={JMLR. org}
}

@ARTICLE{duchi_power_of_two,
  author={Duchi, J. C. and 
  Jordan, M. I. and 
  Wainwright, M. J. and 
  Wibisono, A.},
  journal={IEEE Trans. on Information Theory}, 
  title={Optimal Rates for Zero-Order Convex Optimization: The Power of Two Function Evaluations}, 
  year={2015},
  volume={61},

  number={5},

  pages={2788-2806},

  doi={10.1109/TIT.2015.2409256}}

@Article{Bertsekas1973,
    author={Bertsekas, D. P.},
    title={Stochastic optimization problems with nondifferentiable cost functionals},
    journal={J. of Optimization Theory and Applications},
    year={1973},
    month={Aug},
    day={01},
    volume={12},
    number={2},
    pages={218-231},
    doi={10.1007/BF00934819},
    }

@Article{Berahas2022,
author={Berahas, A. S.
and Cao, L.
and Choromanski, K.
and Scheinberg, K.},
title={A Theoretical and Empirical Comparison of Gradient Approximations in Derivative-Free Optimization},
journal={Foundations of Comp.al Mathematics},
year={2022},
month={Apr},
day={01},
volume={22},
number={2},
pages={507-560},
doi={10.1007/s10208-021-09513-z},
}

@article{Kiefer1952StochasticEO,
  title={Stochastic Estimation of Maximum of a Regression Function},
  author={Kiefer, J. and Wolfowitz, J.},
  journal={Annals of Mathematical Statistics},
  year={1952},
  volume={23},
  pages={462-466}
}

@article{nesterov2017random,
  title={Random gradient-free minimization of convex functions},
  author={Nesterov, Y. and Spokoiny, V.},
  journal={Foundations of Comp.al Mathematics},
  volume={17},
  pages={527--566},
  year={2017},
  publisher={Springer}
}

@article{demetrio2021adversarial,
  title={Adversarial exemples: A survey and experimental evaluation of practical attacks on machine learning for windows malware detection},
  author={Demetrio, Luca and Coull, Scott E and Biggio, Battista and Lagorio, Giovanni and Armando, Alessandro and Roli, Fabio},
  journal={ACM Trans. on Privacy and Security (TOPS)},
  volume={24},
  number={4},
  pages={1--31},
  year={2021},
  publisher={ACM New York, NY, USA}
}

@article{demetrio2021functionality,
  title={Functionality-preserving black-box optimization of adversarial windows malware},
  author={Demetrio, Luca and Biggio, Battista and Lagorio, Giovanni and Roli, Fabio and Armando, Alessandro},
  journal={IEEE Trans. on Inf.Forensics and Security},
  volume={16},
  pages={3469--3478},
  year={2021},
  publisher={IEEE}
}

@inproceedings{pierazzi2020intriguing,
  title={Intriguing properties of adversarial ml attacks in the problem space},
  author={Pierazzi, Fabio and Pendlebury, Feargus and Cortellazzi, Jacopo and Cavallaro, Lorenzo},
  booktitle={2020 IEEE symposium on security and privacy (SP)},
  pages={1332--1349},
  year={2020},
  organization={IEEE}
}

@inproceedings{lucas2021malware,
  title={Malware makeover: Breaking ml-based static analysis by modifying executable bytes},
  author={Lucas, Keane and Sharif, Mahmood and Bauer, Lujo and Reiter, Michael K and Shintre, Saurabh},
  booktitle={Proc. of 2021 ACM Asia Conf. on Computer and Communications Security},
  pages={744--758},
  year={2021}
}

@inproceedings{castro2019aimed,
  title={Aimed: Evolving malware with genetic programming to evade detection},
  author={Castro, Raphael Labaca and Schmitt, Corinna and Dreo, Gabi},
  booktitle={2019 TrustCom/BigDataSE},
  pages={240--247},
  year={2019},
  organization={IEEE}
}

@Inbook{Kramer2017,
author="Kramer, Oliver",
title="Genetic Algorithms",
bookTitle="Genetic Algorithm Essentials",
year="2017",
publisher="Springer International Publishing",
pages="11--19",
doi="10.1007/978-3-319-52156-5_2",
}

@article{anderson2017evading,
  title={Evading machine learning malware detection},
  author={Anderson, Hyrum S and Kharkar, Anant and Filar, Bobby and Roth, Phil},
  journal={black Hat},
  volume={2017},
  pages={1--6},
  year={2017}
}

@inproceedings{song2022mab,
  title={Mab-malware: A reinforcement learning framework for blackbox generation of adversarial malware},
  author={Song, Wei and Li, Xuezixiang and Afroz, Sadia and Garg, Deepali and Kuznetsov, Dmitry and Yin, Heng},
  booktitle={Proc. of the 2022 ACM on Asia Conf. on computer and communications security},
  pages={990--1003},
  year={2022}
}

@inproceedings{raff2018malware,
  title={Malware detection by eating a whole exe},
  author={Raff, Edward and Barker, Jon and Sylvester, Jared and Brandon, Robert and Catanzaro, Bryan and Nicholas, Charles K},
  booktitle={Workshops at the 32nd AAAI Conf. on artificial intelligence},
  year={2018}
}

@article{anderson2018ember,
  title={Ember: an open dataset for training static pe malware machine learning models},
  author={Anderson, Hyrum S and Roth, Phil},
  journal={arXiv preprint arXiv:1804.04637},
  year={2018}
}

@article{demetrio2021secml,
  title={Secml-malware: Pentesting windows malware classifiers with adversarial exemples in python},
  author={Demetrio, Luca and Biggio, Battista},
  journal={arXiv preprint arXiv:2104.12848},
  year={2021}
}

@article{biggio2018wild,
  title={Wild patterns: Ten years after the rise of adversarial machine learning},
  author={Biggio, Battista and Roli, Fabio},
  journal={Pattern Recognition},
  volume={84},
  pages={317--331},
  year={2018},
  publisher={Elsevier}
}

@inproceedings{trizna2022quo,
  title={Quo Vadis: hybrid machine learning meta-model based on contextual and behavioral malware representations},
  author={Trizna, Dmitrijs},
  booktitle={Proc. of 15th ACM Workshop on Artificial Intelligence and Security},
  pages={127--136},
  year={2022}
}

@article{gibert2020rise,
  title={The rise of machine learning for detection and classification of malware: Research developments, trends and challenges},
  author={Gibert, Daniel and Mateu, Carles and Planes, Jordi},
  journal={J. of Network and Comp. Applications},
  volume={153},
  pages={102526},
  year={2020},
  publisher={Elsevier}
}

@inproceedings{kolosnjaji2018adversarial,
  title={Adversarial malware binaries: Evading deep learning for malware detection in executables},
  author={Kolosnjaji, Bojan and Demontis, Ambra and Biggio, Battista and Maiorca, Davide and Giacinto, Giorgio and Eckert, Claudia and Roli, Fabio},
  booktitle={26th European signal processing conference (EUSIPCO)},
  pages={533--537},
  year={2018},
  organization={IEEE}
}

@inproceedings{sebastian2020avclass2,
  title={Avclass2: Massive malware tag extraction from av labels},
  author={Sebasti{\'a}n, Silvia and Caballero, Juan},
  booktitle={Proc. of 36th Annual Computer Security Applications Conference},
  pages={42--53},
  year={2020}
}

@inproceedings{croce2021robustbench,
  title={RobustBench: a standardized adversarial robustness benchmark},
  author={Croce, Francesco and Andriushchenko, Maksym and Sehwag, Vikash and Debenedetti, Edoardo and Flammarion, Nicolas and Chiang, Mung and Mittal, Prateek and Hein, Matthias},
  booktitle={35th Conf. on Neural Information Processing Systems Datasets and Benchmarks Track},
  year={2021}
}

@inproceedings{lucas2023adversarial,
  title={Adversarial training for $\{$Raw-Binary$\}$ malware classifiers},
  author={Lucas, Keane and Pai, Samruddhi and Lin, Weiran and Bauer, Lujo and Reiter, Michael K and Sharif, Mahmood},
  booktitle={32nd USENIX Security Symposium (USENIX Security 23)},
  pages={1163--1180},
  year={2023}
}

@inproceedings{gibert2023certified,
  title={Certified Robustness of Static Deep Learning-based Malware Detectors against Patch and Append Attacks},
  author={Gibert, Daniel and Zizzo, Giulio and Le, Quan},
  booktitle={Proc. of 16th ACM Workshop on Artificial Intelligence and Security},
  pages={173--184},
  year={2023}
}

@article{huang2024rs,
  title={RS-Del: Edit distance robustness certificates for sequence classifiers via randomized deletion},
  author={Huang, Zhuoqun and Marchant, Neil G and Lucas, Keane and Bauer, Lujo and Ohrimenko, Olga and Rubinstein, Benjamin},
  journal={Advances in Neural Information Processing Systems},
  volume={36},
  year={2024}
}

\appendix

\section{Auxiliary Results} \label{app:aux_results}
We report here every lemmas and propositions required to prove the theoretical results of \autoref{sec:theoretical}. 

\mypar{Notation.} From now on, we denote with $\mathcal{F}_k$ the filtration $ \sigma(G_0, \cdots, G_{k - 1})$. 
We denote with $F_z$ the target function of \autoref{eqn:adv_problem} for a malware $z \in Z$ and $F_h$ as its smooth approximation defined as
\begin{equation}\label{eqn:smoothing_fz}
    F_h(x) := \mathbb{E}_{u \in \mathbb{B}^d}[F_z(x + hu)],
\end{equation}
where $\mathbb{B}^d:=\{x\in\mathbb{R}^d \, | \, \|x\| \leq 1\}$ is the $d$ dimensional unit-ball. 
To improve the readability of lemmas and proofs, we omit the indication of $z$ in $F_h$.

\begin{proposition}[Smoothing properties]\label{prop:smt_props}
 Let $F_h$ be the smooth approximation defined in \autoref{eqn:smoothing_fz}. Then, if $F_z$ is $L$-Lipschitz continuous - i.e., $\forall x, y\in \mathbb{R}^{d}$, $|F_z(x) - F_z(y)| \leq L \| x - y \|$, $F_h$ is $L$-Lipschitz continuous, differentiable and for every $x,y \in \mathbb{R}^d$
 \begin{equation*}
    \begin{aligned}
         \| \nabla F_h(x) - \nabla F_h(y) \| \leq \frac{L \sqrt{d}}{h} \| x - y \|,%
    \end{aligned}
 \end{equation*}
 and
 \begin{equation*}
     F_h(x) \leq F_z(x) + L h.
 \end{equation*}
\end{proposition}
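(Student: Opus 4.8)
The plan is to dispatch the two inequality-type claims first, since both follow directly from the integral representation, and then to treat differentiability and the gradient-Lipschitz estimate, which are the substantive parts. Writing $F_h(x) = \mathbb{E}_{u\sim\mathbb{B}^d}[F_z(x+hu)]$ with $u$ uniform on the unit ball, I would obtain Lipschitz continuity of $F_h$ by pulling the difference inside the expectation,
\begin{equation*}
|F_h(x) - F_h(y)| \leq \mathbb{E}_{u\sim\mathbb{B}^d}|F_z(x+hu) - F_z(y+hu)| \leq L\|x-y\|,
\end{equation*}
using only Jensen's inequality and the $L$-Lipschitzness of $F_z$. The bound $F_h(x) \leq F_z(x) + Lh$ follows the same way, since $F_h(x) - F_z(x) = \mathbb{E}_{u\sim\mathbb{B}^d}[F_z(x+hu) - F_z(x)] \leq L h\,\mathbb{E}_{u\sim\mathbb{B}^d}\|u\| \leq Lh$, because $\|u\| \leq 1$ on the ball.

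For differentiability and the gradient formula, which is the heart of the proof, I would first change variables $y = x + hu$ to rewrite the smoothing as an average over a ball of radius $h$ centered at $x$,
\begin{equation*}
F_h(x) = \frac{1}{\mathrm{vol}(\mathbb{B}^d)\,h^d}\int_{\{y \,:\, \|y-x\|\leq h\}} F_z(y)\,dy.
\end{equation*}
Since the domain of integration is a ball that merely translates as $x$ varies, I would differentiate it with the divergence (Reynolds transport) theorem, which converts the derivative into a surface integral over the sphere $\{\|y-x\|=h\}$ weighted by the outward unit normal. Parametrising that sphere as $y = x + hv$ with $v$ uniform on the unit sphere $\mathbb{S}^{d-1}$ and using the identity $\mathrm{vol}(\mathbb{S}^{d-1})/\mathrm{vol}(\mathbb{B}^d) = d$, this yields the representation
\begin{equation*}
\nabla F_h(x) = \frac{d}{h}\,\mathbb{E}_{v\sim\mathbb{S}^{d-1}}\bigl[F_z(x+hv)\,v\bigr].
\end{equation*}
Differentiability is then read off from this formula, whose right-hand side is continuous in $x$ by Lipschitz continuity of $F_z$ and dominated convergence, so $F_h \in C^1$ with this gradient. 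This step is the main obstacle: because $F_z$ is only assumed Lipschitz and not differentiable, one cannot differentiate under the integral sign directly, and must instead transfer the derivative onto the translating domain.

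Finally, for the gradient-Lipschitz estimate I would exploit the representation above together with a projection trick responsible for the favourable $\sqrt{d}$ dependence. Subtracting the two gradients gives $\nabla F_h(x) - \nabla F_h(y) = \frac{d}{h}\mathbb{E}_{v\sim\mathbb{S}^{d-1}}[(F_z(x+hv) - F_z(y+hv))\,v]$. Testing against an arbitrary unit vector $w$ and bounding the scalar factor by $|F_z(x+hv) - F_z(y+hv)| \leq L\|x-y\|$, I would reduce the problem to controlling $\mathbb{E}_{v}|\langle v, w\rangle|$. By Jensen's inequality and the moment identity $\mathbb{E}_{v}[\langle v, w\rangle^2] = \|w\|^2/d$, which holds because $\mathbb{E}_{v}[vv^\top] = I/d$ on the uniform sphere, one gets $\mathbb{E}_{v}|\langle v, w\rangle| \leq 1/\sqrt{d}$, hence
\begin{equation*}
\langle \nabla F_h(x) - \nabla F_h(y),\, w\rangle \leq \frac{d}{h}\,L\|x-y\| \cdot \frac{1}{\sqrt{d}} = \frac{L\sqrt{d}}{h}\|x-y\|.
\end{equation*}
Choosing $w$ in the direction of $\nabla F_h(x) - \nabla F_h(y)$ yields the claimed bound. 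I would emphasise that the crude triangle inequality $\|\nabla F_h(x) - \nabla F_h(y)\| \leq \frac{d}{h}\mathbb{E}_{v}|F_z(x+hv) - F_z(y+hv)|$ only gives the weaker constant $Ld/h$; it is precisely the projection onto $w$ together with the second-moment bound on $\langle v, w\rangle$ that improves the dimension dependence from $d$ to $\sqrt{d}$.
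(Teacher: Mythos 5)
Your proof is correct, and it reconstructs in full the standard argument that the paper itself does not spell out (its ``proof'' is only a pointer to Lemma~8 of Duchi et al.\ and related references): Jensen's inequality for the two elementary bounds, the divergence-theorem representation $\nabla F_h(x) = \tfrac{d}{h}\,\mathbb{E}_{v\sim\mathbb{S}^{d-1}}[F_z(x+hv)\,v]$ for differentiability, and the projection step with $\mathbb{E}_v[\langle v,w\rangle^2]=1/d$ to obtain the $L\sqrt{d}/h$ constant rather than $Ld/h$. All steps, including your caveat that one must move the derivative onto the translating domain because $F_z$ need not be differentiable, match the proofs in the works the paper cites, so there is nothing to correct.
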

\begin{proof}
This is a standard result proposed and proved in different works - see for example 
\cite[Lemma 8]{duchi_rand_smoothing},\cite[Proposition 2.2]{NEURIPS2022_a78f142a} and \cite{yousefian2012stochastic}. 
\end{proof}

\begin{lemma}[Approximation Error]\label{lem:approx_error}
    Let $g(\cdot)$ be defined as in \autoref{eqn:fw_struct_fd} for arbitrary $h >0$ and $G \in O(d)$. Then, if $F_z$ is $L$-Lipschitz (see \autoref{asm:lip_cont}), for every $v \in \mathbb{R}^d$ 
    \begin{equation*}
        \| g(v) \|^2 \leq \frac{d^2 L^2}{\ell}.
    \end{equation*}
\end{lemma}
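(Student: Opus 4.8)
The plan is to exploit the orthonormality of the directions $G e_1, \dots, G e_\ell$, which is the structural fact that makes the whole argument work. First I would rewrite the surrogate in \autoref{eqn:fw_struct_fd} as a linear combination of these directions,
\begin{equation*}
g(v) = \frac{d}{\ell} \sum_{i = 1}^\ell c_i\, G e_i, \qquad c_i := \frac{F_z(v + h G e_i) - F_z(v)}{h},
\end{equation*}
so that the scalar finite-difference coefficients $c_i$ are isolated from the geometric part of the estimator.

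Next I would compute the squared norm by expanding the inner product. Since $G \in O(d)$, its columns are orthonormal, so $\scalarprod{G e_i}{G e_j} = \delta_{ij}$ and every cross term vanishes. This collapses the double sum to a single sum of squares,
\begin{equation*}
\|g(v)\|^2 = \frac{d^2}{\ell^2} \sum_{i = 1}^\ell c_i^2.
\end{equation*}
It then remains only to control each coefficient. Applying the $L$-Lipschitz continuity of \autoref{asm:lip_cont} to the numerator of $c_i$, together with $\|h G e_i\| = h$ (again by orthogonality, so $\|G e_i\| = 1$), gives $|c_i| \leq L$ and hence $c_i^2 \leq L^2$. Substituting this into the identity above and summing the $\ell$ terms yields $\|g(v)\|^2 \leq \frac{d^2}{\ell^2} \cdot \ell L^2 = \frac{d^2 L^2}{\ell}$, which is the claimed bound.

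There is no serious obstacle here; the computation is direct and the Lipschitz step is routine. The only point genuinely requiring care is recognizing that the orthogonality of $G$ is what eliminates the cross terms and produces the favorable $1/\ell$ factor: without it, bounding $\|\sum_i c_i G e_i\|$ only through the triangle inequality (or Cauchy--Schwarz) would lose a factor of $\ell$, giving the weaker estimate $\|g(v)\|^2 \leq d^2 L^2$. Thus the structured (orthogonal) choice of directions is precisely what tightens the approximation-error bound.
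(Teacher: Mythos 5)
Your proof is correct and follows essentially the same route as the paper's: expand $\|g(v)\|^2$, use the orthonormality of the columns of $G$ to kill the cross terms, and then bound each finite-difference coefficient by $L$ via \autoref{asm:lip_cont} and $\|hGe_i\| = h$. The only cosmetic difference is that you fold the $1/h$ factor into the coefficients $c_i$ before expanding, whereas the paper keeps it outside the sum; the substance is identical.
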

\begin{proof}
    By \autoref{eqn:fw_struct_fd}, we have
    \begin{equation*}
        \| g(v) \|^2 = \frac{d^2}{\ell^2 h^2} \Bigg\| \sum\limits_{i = 1}^\ell (F_z(v + h G e_i) - F_z(v))G e_i \Bigg\|^2.
    \end{equation*}
    By orthogonality and since $\| G e_i \| = 1$,
    \begin{equation*}
        \| g(v) \|^2 = \frac{d^2}{\ell^2 h^2} \sum\limits_{i = 1}^\ell (F_z(v + h G e_i) - F_z(v))^2.
    \end{equation*}
    By Assumption \ref{asm:lip_cont}, we get the claim.
\end{proof}

\begin{lemma}[Smoothing Lemma]\label{lem:smoothing_lemma}
  Given a probability space $(\Omega, \mathcal{F}, \mathbb{P})$, let $G: \Omega \to O(d)$ be a random variable where $O(d)$ is the orthogonal group endowed with the Borel $\sigma$-algebra. Assume that the probability distribution of $G$ is the (normalized) Haar measure.  Let $h > 0$ and let  $g$ be the finite difference approximation defined in \autoref{eqn:fw_struct_fd}. Then, let $F_h$ be the smooth approximation of the objective function $F_z$ defined in \autoref{eqn:smoothing_fz}
     \begin{equation*}
      (\forall v\in\mathbb{R}^d)\qquad   \mathbb{E}_G[g_{(G, h)}(v)] = \nabla F_{h}(v),
     \end{equation*}
\end{lemma}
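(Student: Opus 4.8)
The plan is to evaluate $\mathbb{E}_G[g_{(G,h)}(v)]$ by linearity and then identify it with the known ball-smoothing gradient formula. First I would write, using linearity of expectation,
\[
\mathbb{E}_G[g_{(G,h)}(v)] = \frac{d}{\ell h}\sum_{i=1}^\ell \mathbb{E}_G\big[(F_z(v+hGe_i) - F_z(v))\,Ge_i\big].
\]
The key structural fact is that when $G$ is drawn from the Haar measure on $O(d)$, each column $Ge_i$ is marginally uniformly distributed on the unit sphere $S^{d-1}$; hence every summand equals $\mathbb{E}_{p}[(F_z(v+hp) - F_z(v))\,p]$ for $p$ uniform on $S^{d-1}$, and the $\ell$ terms collapse to a single one. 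I would then split off the constant term and note that $\mathbb{E}_p[F_z(v)\,p] = F_z(v)\,\mathbb{E}_p[p] = 0$, because the uniform distribution on the sphere is symmetric under $p \mapsto -p$. This reduces the claim to showing $\frac{d}{h}\,\mathbb{E}_{p}[F_z(v+hp)\,p] = \nabla F_h(v)$.

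Next I would establish this last identity, which is the standard fact that the gradient of a ball-smoothed function is a surface integral over the sphere. Starting from $F_h(v) = \frac{1}{\mathrm{vol}(\mathbb{B}^d)}\int_{\mathbb{B}^d}F_z(v+hu)\,du$, I would perform the change of variables $w = v + hu$ to rewrite $F_h$ as a normalized integral of $F_z$ over the ball $\mathbb{B}(v,h)$ of radius $h$ centred at $v$. Differentiating this integral with respect to $v$ — whose dependence now sits entirely in the integration domain — via the divergence theorem converts the volume integral into a surface integral over $\partial\mathbb{B}(v,h)$ with outward unit normal $p$. Parameterising the boundary as $w = v + hp$, $p \in S^{d-1}$, and using $\mathrm{vol}(\mathbb{B}^d) = \mathrm{area}(S^{d-1})/d$, the powers of $h$ together with the area-to-volume ratio combine to yield exactly $\frac{d}{h}\,\mathbb{E}_p[F_z(v+hp)\,p]$, as required.

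Because the surrogate in \eqref{eqn:fw_struct_fd} differs from the symmetric two-sided approximation of \eqref{eqn:struct_surrogate} only by the term $-F_z(v)\,Ge_i$, which I have just argued is mean-zero, an alternative and shorter route is simply to invoke \cite[Lemma 1]{rando2024optimal}: the expected surrogate is unchanged when the two-sided difference is replaced by the one-sided one. I expect the main technical obstacle to be the differentiation-under-the-integral-sign, i.e.\ the divergence-theorem step, since it requires transferring the $v$-dependence into the domain and justifying the interchange even though $F_z$ itself need not be differentiable; once the ball-smoothing gradient identity is in hand, the remainder is bookkeeping with the normalizing constants.
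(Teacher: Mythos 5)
Your proof is correct, but it is genuinely more explicit than the paper's, which simply states that the argument ``follows the same line of \cite[Lemma 1]{rando2024optimal}'' and gives no further detail. You supply the full chain: the Haar-measure fact that each column $Ge_i$ is marginally uniform on $S^{d-1}$ (so the $\ell$ summands collapse to one), the symmetry argument killing the $F_z(v)\,Ge_i$ term, and the divergence-theorem identity $\nabla F_h(v) = \tfrac{d}{h}\,\mathbb{E}_{p\sim S^{d-1}}[F_z(v+hp)\,p]$, with the constants $\mathrm{area}(S^{d-1}) = d\cdot\mathrm{vol}(\mathbb{B}^d)$ and the powers of $h$ checked correctly. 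Notably, your observation that the one-sided surrogate of \autoref{eqn:fw_struct_fd} and the two-sided surrogate of \autoref{eqn:struct_surrogate} have the same expectation (because the extra term is mean-zero under the sign-symmetric spherical law) is precisely the bridging step the paper's one-line citation tacitly relies on, since the cited lemma is stated for the central difference; making it explicit is a genuine improvement. The one point you flag yourself --- justifying the moving-domain differentiation when $F_z$ is merely Lipschitz rather than differentiable --- is indeed the only technical care needed, and it is handled by the standard smoothing references (e.g.\ the argument of Flaxman--Kalai--McMahan, or \autoref{prop:smt_props}'s sources), under which the identity holds for Lipschitz, indeed for locally integrable, $F_z$. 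In short: your self-contained route buys transparency and portability (no reliance on the exact statement of the external lemma), at the cost of reproving a known result; the paper's route is shorter but leaves the forward-vs-central discrepancy to the reader.
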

\begin{proof}
    The proof follows the same line of \cite[Lemma 1]{rando2024optimal}.
\end{proof}

\begin{lemma}[Function value decrease]\label{lem:fun_values_decrease}
    Under \autoref{asm:lip_cont}    
    \begin{align*}
        \mathbb{E}_{G_k}[F_{h}(v_{k + 1}) \, | \, \mathcal{F}_k] \\ \leq F_{h}(v_k) - \gamma_k \| \nabla F_{h}(v_k) \|^2 + \frac{L \sqrt{d}}{h} \gamma_k^2 \mathbb{E}_{G_k}[\| g_k(v_{k}) \|^2 \, | \, \mathcal{F}_k].
    \end{align*}
\end{lemma}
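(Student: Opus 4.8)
The plan is to recognize this as the standard stochastic descent inequality, specialized to the smoothed objective $F_h$ and to the conditionally-unbiased surrogate $g_k$. Both ingredients are already in hand: by \autoref{prop:smt_props} the map $F_h$ is differentiable with $\tfrac{L\sqrt{d}}{h}$-Lipschitz gradient, and by \autoref{lem:smoothing_lemma} the surrogate satisfies $\mathbb{E}_{G_k}[g_k(v_k)\mid\mathcal{F}_k]=\nabla F_h(v_k)$. Note that the statement concerns the gradient-step branch of \autoref{eqn:zexe_iteration}, where $v_{k+1}=v_k-\gamma_k g_k(v_k)$, so that $v_{k+1}-v_k=-\gamma_k g_k(v_k)$.

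First I would invoke gradient-Lipschitz continuity of $F_h$ to obtain the quadratic upper bound (the descent lemma). With smoothness constant $\tfrac{L\sqrt{d}}{h}$ and the increment above, this gives
\begin{equation*}
F_h(v_{k+1}) \leq F_h(v_k) - \gamma_k \langle \nabla F_h(v_k), g_k(v_k)\rangle + \frac{L\sqrt{d}}{2h}\gamma_k^2 \|g_k(v_k)\|^2 .
\end{equation*}
Next I would take the conditional expectation $\mathbb{E}_{G_k}[\,\cdot\mid\mathcal{F}_k]$ of both sides. Since $v_k$ is $\mathcal{F}_k$-measurable while $G_k$ is sampled independently from the Haar measure, both $F_h(v_k)$ and $\nabla F_h(v_k)$ are treated as fixed and factor out of the expectation. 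The cross term then reads $-\gamma_k\langle \nabla F_h(v_k), \mathbb{E}_{G_k}[g_k(v_k)\mid\mathcal{F}_k]\rangle$, and applying \autoref{lem:smoothing_lemma} collapses it to $-\gamma_k\|\nabla F_h(v_k)\|^2$. The quadratic term is left untouched as $\tfrac{L\sqrt{d}}{2h}\gamma_k^2\,\mathbb{E}_{G_k}[\|g_k(v_k)\|^2\mid\mathcal{F}_k]$; since the bound with the factor $\tfrac12$ is tighter, dropping it (upper bounding $\tfrac12$ by $1$) yields exactly the claimed inequality.

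The computation itself is routine, so the main thing requiring care is the conditioning bookkeeping: ensuring that $v_k$, and hence $\nabla F_h(v_k)$, is genuinely $\mathcal{F}_k$-measurable so that the unbiasedness identity of \autoref{lem:smoothing_lemma} is applied only to $g_k(v_k)$ while the deterministic factors are pulled outside. I would also remark that the exploration (resampling) branch of \autoref{eqn:zexe_iteration} is deliberately excluded here, consistent with how this lemma feeds into the proof of \autoref{thm:nonconv_eps_pos}, which restricts attention to the consecutive iterations on which $\|g_k\|^2>\gradthreshold$ and the gradient step is taken.
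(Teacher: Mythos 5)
Your proposal is correct and follows essentially the same route as the paper: apply the descent lemma to the $\tfrac{L\sqrt{d}}{h}$-smooth function $F_h$ along the gradient-step update, then take the conditional expectation and use the unbiasedness of $g_k$ from \autoref{lem:smoothing_lemma} to collapse the cross term. The only cosmetic difference is that you retain the factor $\tfrac12$ in the quadratic term and then discard it, whereas the paper states the descent inequality directly without it.
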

\begin{proof}
In the following, we denote with $g_k(\cdot) := g_{(G_k, h)}(\cdot)$. By Proposition \ref{prop:smt_props}, we have that $F_{h}$ is $L \sqrt{d} / h$ smooth. Thus, by the Descent Lemma \cite{polyak1987introduction},
\begin{align*}
    F_{h}(v_{k + 1}) \\ \leq F_{h}(v_k) - \gamma_k \scalarprod{\nabla F_{h}(v_k)}{g_k(v_k)} + \frac{L \sqrt{d}}{h} \gamma_k^2 \| g_k(v_k) \|^2.
\end{align*}
Taking the conditional expectation, by \autoref{lem:smoothing_lemma} we get the claim.
\begin{align*}
    \mathbb{E}_{G_k}[F_{h}(v_{k + 1}) \, | \, \mathcal{F}_k] \\ \leq F_{h}(v_k) - \gamma_k \| \nabla F_{h}(v_k) \|^2 + \frac{L \sqrt{d}}{h} \gamma_k^2 \mathbb{E}_{G_k}[\| g_k(v_{k}) \|^2 \, | \, \mathcal{F}_k].
\end{align*}
\end{proof}

\section{Proofs of Theoretical Results} \label{app:proof_main_thm}

\subsection{Proof of \autoref{thm:nonconv_eps_pos}}
    By \autoref{lem:fun_values_decrease} and \autoref{lem:approx_error}, we get for $k = \underline{K}, \cdots, \bar{K}$
    \begin{equation*}
        \mathbb{E}[F_{h}(v_{k + 1}) \, | \, \mathcal{F}_k] \leq F_{h}(v_k) - \gamma_k \| \nabla F_{h}(v_k) \|^2 + \frac{L^3 d^2 \sqrt{d}}{ \ell } \frac{\gamma_k^2}{h}.
    \end{equation*}
    Taking the full expectation,
    \begin{equation*}
        \gamma_k \mathbb{E}[\| \nabla F_{h}(v_k) \|^2] \leq \mathbb{E}[F_{h}(v_k) - F_{h}(v_{k + 1})] + \frac{L^3 d^2 \sqrt{d}}{ \ell} \frac{\gamma_k^2}{h}.
    \end{equation*}
    Summing from $i= \underline{K}, \cdots, \bar{K}$,
    \begin{align*}
        \sum\limits_{i = \underline{K} }^{\bar{K}} \gamma_i \mathbb{E}[\| \nabla F_{h}(v_i) \|^2] \\ \leq \mathbb{E}[F_{h_{\underline{K}}}(v_{\underline{K}}) - F_{h}(v_{k + 1})] + \frac{L^3 d^2 \sqrt{d}}{ \ell} \sum\limits_{i = \underline{K}}^{\bar{K}} \frac{\gamma_i^2}{h}.
    \end{align*}
    By the definition of $F_{h}$, we have $F_{h}(v) \geq \min F_z$ for every $v \in \mathbb{R}^d$ and $h > 0$. Thus,
    \begin{equation*}
        \sum\limits_{i = \underline{K}}^{\bar{K}} \gamma_i \mathbb{E}[\| \nabla F_{h}(v_i) \|^2] \leq F_{h}(v_{\underline{K}}) - \min F_z + \frac{L^3 d^2 \sqrt{d}}{ \ell} \sum\limits_{i = \underline{K}}^{\bar{K}} \frac{\gamma_i^2}{h}.
    \end{equation*}
    This concludes the proof.

\subsection{Proof of \autoref{cor:nonconv_eps_pos} }
By \autoref{thm:nonconv_eps_pos}, we have
\begin{equation*}
    \eta_{\underline{K}}^{\bar{K}} \leq \Bigg( F_{h_{\underline{K}}}(v_{\underline{K}}) - \min F_z + \frac{L^3 d^2 \sqrt{d}}{ \ell} \sum\limits_{i = \underline{K}}^{\bar{K}} \frac{\gamma_i^2}{h} \Bigg) / \Bigg( \sum\limits_{\underline{K}}^{\bar{K}} \gamma_i \Bigg).
\end{equation*}
Due to the choice of $\gamma_k = \gamma$, we get
\begin{equation*}
    \eta_{\underline{K}}^{\bar{K}} \leq \frac{F_{h}(v_{\underline{K}}) - \min F_z}{\gamma (\bar{K} - \underline{K})} + \frac{L^3 d^2 \sqrt{d}}{ h \ell (\bar{K} - \underline{K})} \gamma.
\end{equation*}
Now, let $K = \bar{K} - \underline{K}$. Minimizing the right handsize with respect to $\gamma$ we get
\begin{equation*}
    \hat{\gamma} = \sqrt{\frac{(F_h(v_{\underline{K}}) - \min F_z )\ell h }{KL^3d^2 \sqrt{d}}}.
\end{equation*}
Let $\varepsilon \in (0,1)$. Choosing $\gamma = \hat{\gamma}$ we get $\eta_{\underline{K}}^{\bar{K}} \leq \varepsilon$ if 
\begin{equation*}
    K \geq 4\frac{(F_h(v_{\underline{K}}) - \min F_z )L^3 d^2 \sqrt{d} }{\ell h} \epsilon^{-2}.
\end{equation*}
This concludes the proof.

\end{document}